\newcommand{\Roberto}[1]{\todo[color=blue!20,author=\textbf{Roberto},inline]{\small #1\\}}
\def\RR{{\mathbb R}}  
\def\PP{{\mathbb P}}
\def\EE{{\mathbb E}}
\newtheorem{example}{Example}
\newtheorem{theorem}{Theorem}
\newtheorem{lemma}[theorem]{Lemma}
\title{When is the Computation of a Feature Attribution Method Tractable?}
\author{
P. Barcel\'o$^{1,3,4}$
\And
R. Cominetti$^{1,2}$\And
M. Morgado$^{1}$\\
\affiliations
$^1$Institute for Mathematical and Computational Engineering, Universidad Cat\'olica de Chile\\
$^2$Department of Industrial and Systems Engineering, Universidad Cat\'olica de Chile\\
$^3$IMFD Chile\hspace{2ex}and\hspace{1ex}
$^4$CENIA Chile\\
\emails{
{pablo.barcelo, 
roberto.cominetti, mat.mw} @uc.cl}}
\begin{document}

\maketitle
\begin{abstract}
    Feature attribution methods have become essential for explaining machine learning models. Many popular approaches, such as SHAP and Banzhaf values, are grounded in power indices from cooperative game theory, which measure the contribution of features to model predictions. This work studies the computational complexity of power indices beyond SHAP, addressing the conditions under which they can be computed efficiently. 
    We identify a simple condition on power indices that ensures that computation is polynomially equivalent to evaluating expected values, extending known results for SHAP. We also introduce Bernoulli power indices, showing that their computation can be simplified to a constant number of expected value evaluations. Furthermore, we explore interaction power indices that quantify the importance of feature subsets, proving that their computation complexity mirrors that of individual features. 
\end{abstract}

\section{Introduction}

\paragraph{Feature attribution methods.} Explainability of machine learning models has become one of the most pressing challenges associated with deploying this technology in critical applications. In fact, in many domains such as healthcare, finance, and autonomous systems, the ability to understand and trust model predictions is essential for ensuring safety, fairness, and accountability. As a result, research on providing robust and meaningful explanations for machine learning models has proliferated in recent years 
\cite{ribeiro2016should,doshi-velez2017towards,lipton2018mythos,DBLP:conf/aaai/0001I22,DBLP:conf/lics/Darwiche23}.

A popular approach to explaining a machine learning model's prediction is through {\em feature attribution methods}. These methods aim to quantify the contribution of each individual feature to the prediction $F(\omega)$ made by a model $F$ for a given input $\omega$. Several of the feature attribution methods used in practice are based on power indices from cooperative game theory, including the well-known {\em Shapley} and {\em Banzhaf} values
\cite{lundberg2017unified,DBLP:conf/icml/SundararajanN20,DBLP:conf/uai/KarczmarzM0SW22,DBLP:conf/aistats/WangJ23}. We provide a brief introduction to such indices below.

\paragraph{Power indices.} 
Let $Y\!=\!(Y_1,\ldots,Y_n)$ be a vector of features where each $Y_i$ takes values in a finite set $\Omega_i$. We assume that the $Y_i$'s are independent random variables with corresponding product distribution $$\PP(Y\!\!=\!\omega) \ = \ \prod_{i=1}^n\PP(Y_i\!=\!\omega_i)$$
 over
the set of possible outcomes $\omega\in\Omega=\Omega_1 \times \ldots \times \Omega_n$. We are also given a learned 
model $F:\Omega\to\RR$ that computes a prediction $F(\omega)$, for each $\omega \in \Omega$. Throughout the paper, we assume that $F(\omega)$ can be computed in polynomial time in the number of features $n$. 

Consider a fixed $e \in \Omega$ and let $N\triangleq\{1,\ldots,n\}$. For each $a \in N$, we measure the relevance of the feature $Y_a$ in explaining the outcome $F(e)$ by using power indices from cooperative game theory. To do this, we first consider 
the conditional expectation of $F$ on the event  $\{Y_S\!=\!e_S\}$, that is
\begin{align*}
\EE[F|S]&=\sum_{\omega\in\Omega}F(\omega)\,\PP(Y\!=\!\omega|Y_S\!=\!e_S)\\
&=\sum_{\omega\in\Omega}F(\omega)\prod_{i\in S}\delta_i(\omega_i)\prod_{i\not\in S}\PP(Y_i\!=\!\omega_i)
\end{align*} 
with $\delta_i$ defined as follows: 
$$
\delta_i(w_i) \ = \ 
\begin{cases} 
1 \ \ \ &\text{if $w_i = e_i$}, \\ 
0 \ \ \ &\text{otherwise}. 
\end{cases} 
$$
The {\em marginal contribution} of feature $a\in N$ to a coalition $S\subseteq N \setminus \{a\}$ is  
defined as 
$$m(a;S) \ \triangleq \ \EE[F|S\cup\{a\}]-\EE[F|S].$$
Then, for a given probability distribution $Q$ over 
the set of all subsets of 
$N \setminus \{a\}$, 
the {\em average  contribution} of $a$ according to $Q$ is given by
$$I(a;F) \ \triangleq  \sum_{S\subseteq N \setminus \{a\}} \!\!\!Q(S)\,m(a;S).$$
In cooperative games this corresponds to a {\em power index}, among which the Shapley and Banzhaf values are the most prominent examples: 

\begin{itemize} 

\item {\em Shapley value:} In this case, $Q(S) = \frac{|S|! (n - |S| -1)!}{n!}$ for each $S \subseteq 
N \setminus \{a\}$. 

\item {\em Banzhaf value:} This value considers a uniform probability distribution over the subsets of 
$N \setminus \{a\}$, i.e., $Q(S) = \frac{1}{2^{n - 1}}$ for each $S \subseteq N \setminus \{a\}$. 
 
\end{itemize} 
Throughout the paper, we will also consider some less common alternatives of power indices, 
such as the binomial indices, the Bernoulli indices, and others.

\paragraph{Previous results.} The complexity of computing Shapley values, often referred to as SHAP scores in the context of ML models, has garnered significant attention in recent years \cite{van2022tractability,DBLP:journals/jmlr/ArenasBBM23,DBLP:conf/ijcai/Huang024}. 
In particular, \citeauthor{van2022tractability} have shown that the problem of computing SHAP scores over a model $F$ is polynomially equivalent to computing expected values over $F$. Hence: 

\begin{itemize}

\item The computation of SHAP scores is efficient for any class of models where expected values can be calculated in polynomial time. This means, e.g., that SHAP scores can be computed in polynomial time for linear regression models \cite{DBLP:conf/ijcai/KhosraviLCB19}, decision trees \cite{DBLP:journals/corr/abs-2006-16341}, 
deterministic and decomposable Boolean circuits \cite{DBLP:journals/jmlr/ArenasBBM23}, and CNF formulas of bounded treewidth \cite{DBLP:conf/lpar/FerraraPV05}. 

\item The computation of SHAP scores becomes intractable for model classes where computing expected values is itself intractable. Notably, this includes logistic regression models \cite{van2022tractability} and (monotone) DNF formulas \cite{DBLP:journals/siamcomp/ProvanB83}. 
\end{itemize}

\paragraph{Context.} Although SHAP scores are the most widely studied feature attribution method, it has recently been argued that they are by no means a {\em silver bullet} for solving practical explainability challenges. In fact, it has been noted that, in some cases, other indices, such as the Banzhaf values, can provide more meaningful explainability results, depending on the class of models considered \cite{DBLP:conf/uai/KarczmarzM0SW22,DBLP:conf/aistats/WangJ23}. Unfortunately, the computational complexity of power indices other than SHAP remains poorly understood, leading to the following natural question: 
\begin{itemize}
    \item Which power indices, as defined above, and for which classes of models, allow computation to be performed in polynomial time?
\end{itemize}
 This question is critical for assessing the practical applicability of different power indices. Based on the previous observations, we rephrase this question as follows: 
\begin{itemize} 
\item Which power indices enable computation that is polynomially equivalent to evaluating expected values? 
\end{itemize} 

\paragraph{Results on simple power indices.} We first focus on so called {\em simple} power indices, in which the value of $Q(S)$ only depends on the cardinality of $S$, for each $S \subseteq N \setminus \{a\}$. In this case, we denote $Q(S)$ as $q_{|S|}$. 
Notice that both Shapley and Banzhaf values 
are simple power indices. 

\begin{itemize} 

\item {\em Power index computation based on expected values:}
We begin by observing that a straightforward extension of the results by \citeauthor{van2022tractability} shows that the computation of any simple power index over a model $F$ can be reduced, in polynomial time, to the computation of expected values over $F$. Specifically, the computation of an arbitrary simple power index over $F$ can be carried out by evaluating the expected value of $F$ over $2n$ different probability distributions, where $n$ is the number of features of $F$, and then performing polynomial interpolation on the resulting values. 

\item {\em Expected value computation based on power indices:}
We then study the converse direction, i.e., whether the computation of expected values for $F$ can be carried out in polynomial time if we are granted access to an oracle that computes power indices for $F$ in polynomial time. We show that this is true for any simple power index that satisfies $q_0 > 0$, which includes many of the power indices studied in the literature (e.g., Shapley and Banzhaf values). While for Shapley values, this is a straightforward consequence of its {\em efficiency property}, for other indices that do not satisfy this property (including Banzhaf), the proof is more complicated and involves polynomial interpolation. We also observe that the condition $q_0 > 0$ is necessary, in the sense that there are simple 
power indices with $q_0 = 0$ and classes of models where the computation of the index is tractable, while the computation of expected values is intractable.  
\end{itemize}

\paragraph{Results on Bernoulli power indices.}
As previously mentioned, simple power indices can be computed via polynomial interpolation over a polynomial number of expected values. We prove that this can be further simplified for what we term \emph{Bernoulli} power indices. These indices are defined by probability distributions \( Q \) such that \( Q(S) \), for any \( S \subseteq N \setminus \{a\} \), is determined by independent Bernoulli trials. Notably, we show that for such power indices—which are not necessarily simple and include the important case of the Banzhaf value—computation can be reduced, in polynomial time, 
to calculating the difference between only \emph{two} expected values.

\paragraph{Results on interaction power indices.} 
A recent trend in feature attribution methods focuses on developing approaches that evaluate the importance not only of individual features, as exemplified by the previously introduced power indices, but also of feature sets of arbitrary size \cite{beliakov2020discrete,DBLP:conf/icml/SundararajanDA20}. This shift aims to capture and quantify the significance of feature interactions. However, the computational complexity of interaction power indices across different classes of models remains unexplored.
We show that for {\em simple} interaction power indices, the computational scenario mirrors that of simple power indices for individual features. Specifically, the computation of simple interaction power indices can be performed in polynomial time if and only if expected values can also be computed in polynomial time. Proving this result requires an extension of our previous techniques, as the computation of interaction power indices now relies on \emph{bivariate} polynomial interpolation using expected values. We also investigate \emph{Bernoulli} interaction power indices and show that, for feature sets of constant size, these indices can be computed with a fixed number of calls to an oracle for evaluating expected values.

\paragraph{Organization of the paper.} Our results on simple power indices are presented in Section \ref{sec:spi}, those for Bernoulli power indices in Section \ref{sec:bpi}, and those for interaction power indices in Section \ref{sec:ipi}. 
Final remarks can be found in Section \ref{sec:final}. 

\paragraph{Notations.} We let $N=\{1,\ldots,n\}$ be the set of features. For $A\subseteq N$ we denote $|A|$ its cardinality, $A^c=N\setminus A$ its complement, $\mathcal{P}(A)$ the family of all subsets of $A$, and $\mathcal{P}_k(A)$ the subsets of cardinality $k$.

\section{Cardinality-based indices}
\label{sec:spi}

We begin by considering the {\em simple power indices} where the probability $Q(S)\!=\!q_{|S|}$  depends only on the 
cardinality of $S$, and hence $q_0,\ldots,q_{n-1}\!\!\geq\! 0$ are such 
that $\sum_{k=0}^{n-1}{n-1\choose k}q_k\!=\!1$. 

\subsection{From expected values to power indices}

In this section, we show that the computation of any simple power index for a model 
$F$ can be reduced, in polynomial time, to evaluating expected values over 
$F$. This result builds on a straightforward extension of the techniques introduced by \citeauthor{van2022tractability}, who established a similar reduction specifically for SHAP scores. 

Denoting $\mathcal{P}^{a}_k=\mathcal{P}_k(\{a\}^c)$ the set of coalitions $S\subseteq\{a\}^c$ of size $|S|=k$, we have 
\begin{equation}\label{Eq:q-index}I(a;F)=\sum_{k=0}^{n-1}q_k\,m_{k}(a)\end{equation}
where
\begin{equation}\label{Eq:q-index2} m_{k}(a)=\!\!\!\sum_{S\in\mathcal{P}^{a}_k}\!\!m(a;S).\end{equation}
Thus, in order to compute  $I(a;F)$ it suffices to  calculate the sums $m_{k}(a)$. This can be done with the help of the following variant of \cite[Claim 2]{van2022tractability}.
\begin{lemma}\label{Le:fundamental}
 Let $G:\Omega\to\RR$ and $c_k=\sum_{S\in\mathcal{P}_k}\!\EE[G|S]$ with $\mathcal{P}_k$ the family of all subsets of $N$ of size $k$.
 For $z\geq 0$ let $Y^z=(Y_1^z,\ldots,Y_n^z)$ be independent random variables with  
$$\PP(Y^z_i\!=\!\omega_i) \ = \ 
\frac{z\,\delta_i(\omega_i)+\PP(Y_i\!=\!\omega_i)}{1+z}.$$ 
Then
$\sum_{k=0}^{n}c_kz^k=(1\!+\!z)^{n}\,\EE[G(Y^z)]$.
\end{lemma}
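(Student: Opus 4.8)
The plan is to compute the right-hand side $(1+z)^n\,\EE[G(Y^z)]$ directly and show it reproduces the polynomial $\sum_k c_k z^k$. First I would expand the expectation using the product structure of $Y^z$: since the $Y^z_i$ are independent,
$$\EE[G(Y^z)]=\sum_{\omega\in\Omega}G(\omega)\prod_{i=1}^n\PP(Y_i^z\!=\!\omega_i)=\sum_{\omega\in\Omega}G(\omega)\prod_{i=1}^n\frac{z\,\delta_i(\omega_i)+\PP(Y_i\!=\!\omega_i)}{1+z}.$$
Multiplying by $(1+z)^n$ cancels all the denominators, leaving
$$(1+z)^n\,\EE[G(Y^z)]=\sum_{\omega\in\Omega}G(\omega)\prod_{i=1}^n\big(z\,\delta_i(\omega_i)+\PP(Y_i\!=\!\omega_i)\big).$$

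The key step is to expand this product of binomials by the distributive law. Each factor contributes either the term $z\,\delta_i(\omega_i)$ or the term $\PP(Y_i\!=\!\omega_i)$, and collecting the indices where the first term is chosen into a subset $S\subseteq N$ gives
$$\prod_{i=1}^n\big(z\,\delta_i(\omega_i)+\PP(Y_i\!=\!\omega_i)\big)=\sum_{S\subseteq N}z^{|S|}\prod_{i\in S}\delta_i(\omega_i)\prod_{i\not\in S}\PP(Y_i\!=\!\omega_i).$$
This is precisely where the powers of $z$ arise, with the exponent recording the cardinality of $S$.

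I would then swap the order of the two summations (over $\omega$ and over $S$) and recognize that, for each fixed $S$, the resulting inner $\omega$-sum is exactly the definition of $\EE[G|S]$, yielding
$$(1+z)^n\,\EE[G(Y^z)]=\sum_{S\subseteq N}z^{|S|}\sum_{\omega\in\Omega}G(\omega)\prod_{i\in S}\delta_i(\omega_i)\prod_{i\not\in S}\PP(Y_i\!=\!\omega_i)=\sum_{S\subseteq N}z^{|S|}\,\EE[G|S].$$
Finally, grouping the subsets according to their cardinality $k=|S|$ and invoking the definition $c_k=\sum_{S\in\mathcal{P}_k}\EE[G|S]$ collapses the sum into $\sum_{k=0}^n c_k z^k$, as claimed.

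The argument is essentially a bookkeeping calculation, so I do not expect any serious obstacle. The two points to get right are that $\PP(Y_i^z\!=\!\cdot\,)$ is a genuine probability distribution over $\Omega_i$ for every $z\geq 0$ (its total mass is $\tfrac{z\sum_{\omega_i}\delta_i(\omega_i)+\sum_{\omega_i}\PP(Y_i\!=\!\omega_i)}{1+z}=\tfrac{z+1}{1+z}=1$, so $\EE[G(Y^z)]$ is well defined), and that the correspondence between the terms of the product expansion and the subsets $S$ faithfully tracks the exponent of $z$.
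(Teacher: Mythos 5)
Your proof is correct and follows essentially the same route as the paper's: both rest on the binomial-product expansion $\prod_{i=1}^n\bigl(z\,\delta_i(\omega_i)+\PP(Y_i\!=\!\omega_i)\bigr)=\sum_{S\subseteq N}z^{|S|}\prod_{i\in S}\delta_i(\omega_i)\prod_{i\not\in S}\PP(Y_i\!=\!\omega_i)$ together with an exchange of the sums over $\omega$ and $S$, the only difference being that you start from $(1+z)^n\,\EE[G(Y^z)]$ and unwind it, while the paper starts from $\sum_k c_k z^k$ and builds it up. Your added check that $\PP(Y_i^z\!=\!\cdot)$ is a genuine probability distribution is a harmless (and welcome) extra.
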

\begin{proof}
By direct substitution we have
     \begin{align*}
        \sum_{k=0}^{n}c_k z^k&=\sum_{k=0}^{n}\sum_{S\in\mathcal{P}_k}\!\EE[G|S]\cdot z^k \\
&=\sum_{S\subseteq N}\!\EE[G|S]\cdot z^{|S|} \\
&=\sum_{S\subseteq N}\!z^{|S|}\sum_{\omega\in\Omega}G(\omega)\prod_{i\in S}\delta_i(\omega_i)\prod_{i\not\in S}\PP(Y_i\!=\!\omega_i).
    \end{align*}
Exchanging the order of the sums we get
    \begin{align*}
        \sum_{k=0}^{n}c_k z^k
&=\sum_{\omega\in\Omega}G(\omega)\!\!\sum_{S\subseteq N}\prod_{i\in S}z\, \delta_i(\omega_i)\times\prod_{i\not\in S}\PP(Y_i\!=\!\omega_i).
    \end{align*}
Now, using the
    identity 
   \begin{equation}\label{Eq:mix}
   \sum_{S\subseteq N}\mbox{$\prod_{i\in S}u_i\times \prod_{i\in S^c}v_i$}=\prod_{i=1}^n(u_i+v_i)
   \end{equation}
    the inner sum above is exactly 
$(1+z)^n\PP(Y^z\!\!=\!w)$ and then
$\sum_{k=0}^{n}c_k z^k
=(1\!+\!z)^n\;\EE[G(Y^z)]$.
\end{proof}

Using Lemma \ref{Le:fundamental}, we prove that the computation of simple power indices can be reduced to evaluating expected values in polynomial time. For the Shapley value—defined by \( q_k = \frac{k!\,(n-1-k)!}{(n-1)!} \)—this result was previously established in \cite{van2022tractability}.

\begin{theorem}\label{Th:direct}
   The computation of a simple power index $I(a;F)$,
   for any feature $a \in N$, can be reduced in polynomial time to evaluate  $\EE[F(Z)]$ for a family of $2n$ random vectors $Z=(Z_1,\ldots,Z_n)$, each of them with a product distribution.
\end{theorem}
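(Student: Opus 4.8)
The plan is to reduce the computation of $I(a;F)$ to that of the coefficients $m_k(a)$ via \eqref{Eq:q-index}, and then to recover these coefficients by polynomial interpolation from a bounded number of expected‑value evaluations. First I would split each $m_k(a)$ according to the defining difference $m(a;S)=\EE[F|S\cup\{a\}]-\EE[F|S]$, writing
\begin{equation*}
m_k(a)=\sum_{S\in\mathcal{P}^a_k}\EE[F|S\cup\{a\}]-\sum_{S\in\mathcal{P}^a_k}\EE[F|S].
\end{equation*}
The key observation is that both sums range over subsets $S$ of the $n-1$ features in $\{a\}^c$, so each can be captured by Lemma~\ref{Le:fundamental} applied on the reduced feature set $\{a\}^c$, to two suitably chosen auxiliary models.

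For the first sum I would fix the $a$-th coordinate, letting $F^+$ be obtained from $F$ by clamping the $a$-th coordinate to $e_a$; then $\EE[F|S\cup\{a\}]$ equals the conditional expectation of $F^+$ on $\{Y_S=e_S\}$ taken over the features $\{a\}^c$. For the second sum I would instead marginalize $Y_a$, letting $F^-(\omega)=\sum_{\omega_a}F(\omega)\,\PP(Y_a=\omega_a)$, so that $\EE[F|S]$ equals the conditional expectation of $F^-$ on $\{Y_S=e_S\}$ over $\{a\}^c$. With these identifications, the two sums defining $m_k(a)$ are exactly the quantities $c_k$ of Lemma~\ref{Le:fundamental} for $F^+$ and $F^-$ respectively, the lemma now being invoked with $n-1$ features.

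Applying the lemma to each auxiliary model and subtracting yields the generating identity
\begin{equation*}
\sum_{k=0}^{n-1}m_k(a)\,z^k=(1+z)^{\,n-1}\bigl(\EE[F^+(Y^z)]-\EE[F^-(Y^z)]\bigr),
\end{equation*}
where $Y^z$ is the tilted vector of the lemma over $\{a\}^c$. Crucially, each term on the right is an expectation of the original $F$ under a product distribution on all $n$ features: for $\EE[F^+(Y^z)]$ the $a$-th coordinate is the point mass at $e_a$ and the others follow the tilted law $\PP(Y^z_i=\cdot)$, while for $\EE[F^-(Y^z)]$ the $a$-th coordinate keeps its original law $\PP(Y_a=\cdot)$ and the others are tilted. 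Hence each such expectation has the form $\EE[F(Z)]$ with $Z$ a product‑distributed vector of length $n$.

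Finally, since the left‑hand side is a polynomial of degree at most $n-1$, I would sample the right‑hand side at $n$ distinct nonnegative values $z_0,\ldots,z_{n-1}$—costing two expected values per sample and thus $2n$ evaluations of $\EE[F(Z)]$ in total—recover $m_0(a),\ldots,m_{n-1}(a)$ by interpolation, and output $I(a;F)=\sum_{k}q_k\,m_k(a)$. The interpolation and the final weighted sum are routine and polynomial‑time; the step to get right is the second one, namely verifying that clamping versus marginalizing the $a$-th coordinate turns the two halves of $m_k(a)$ into the lemma's $c_k$ over $n-1$ features, together with the bookkeeping in the penultimate step showing that each of the $2n$ expected values is genuinely of the form $\EE[F(Z)]$ for a product law, which is what pins the count at exactly $2n$.
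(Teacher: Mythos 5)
Your proposal is correct and follows essentially the same route as the paper: the paper merges your two auxiliary models into a single function $G(\omega^{-a})=F(e_a,\omega^{-a})-\EE[F(Y_a,\omega^{-a})]$ (so that $m(a;S)=\EE[G|S]$), applies Lemma~\ref{Le:fundamental} once over the $n-1$ features in $\{a\}^c$, interpolates at $n$ points, and only at the end unfolds each $\EE[G]$ as the difference of two $F$-expectations under product laws—exactly your clamped versus marginalized pair. Since the lemma is linear in the model, subtracting before or after invoking it is immaterial, and your accounting of the $2n$ evaluations matches the paper's.
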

\begin{proof}
As mentioned earlier,  computing $I(a;F)$ boils down to calculate the coefficients $c_k=m_{k}(a)$ for $0\leq k\leq n\!-\!1$.
Recall that we have been given a fixed partial instance $e \in \Omega$. 
Let $\Omega^{-a}\!=\bigotimes_{i\neq a}\Omega_i$.  
For $\omega^{-a} \in \Omega^{-a}$, we slightly abuse notation and write $(e_a,\omega^{-a})$ for the element in $\Omega$ that is obtained by extending $\omega^{-a}$ with value $e_a$ for feature $a$. 
Analogously, we write $(Y_a,\omega^{-a})$ and $(Y_a,(Y^z_{i})_{i\neq a})$, where $Y_a$ is a random value that ranges over $\Omega_a$. 
 
Consider the map $G:\Omega^{-a}\to\RR$ 
     defined by $$G(\omega^{-a})\ = \ F(e_a,\omega^{-a})-\EE[F(Y_a,\omega^{-a})]$$ with a partial expectation of $F(\cdot)$ with respect to $Y_a$.
     Then $m(a;S)=\EE[G|S]$ so that
   $c_{k}=\sum_{S\in\mathcal{P}_k^{a}}\EE[G|S]$ and Lemma \ref{Le:fundamental} gives the following: 
$$\sum_{k=0}^{n-1}c_k\,z^k=(1\!+\!z)^{n-1}\EE[G((Y^z_{i})_{i\neq a})].$$

Evaluating at $n$ different points 
$z_0,z_1,\ldots,z_{n-1}\geq 0$, we obtain a linear 
system for the coefficients $c_0,\ldots,c_{n-1}$. This system is defined by a Vandermonde matrix and can be solved uniquely. Thus, the computation of $I(a;F)$ reduces to evaluate the expectations $\EE[G((Y^z_{i})_{i\neq a})]$  
for $z=z_0,\ldots,z_{n-1}$.
The proof is completed by noting that each such expectation for $G$ amounts to compute the difference between the 
expectations of $F$ for two product distributions, namely
$$\EE[G((Y^z_{i})_{i\neq a})]=\EE[F(Y^{1,z})]-\EE[F(Y^{0,z})]$$
with $Y^{1,z}\!=(e_a,(Y^z_{i})_{i\neq a})$ and $Y^{0,z}\!=(Y_a,(Y^z_{i})_{i\neq a})$.
\end{proof}

\subsection{From power indices to expected values} 

Consider a simple power index \( I(a; F) \) with \( q_0 > 0 \). We show that the evaluation of the expected value for a model \( F \) reduces, in polynomial time, to the computation of \( I(a; F) \) for every feature \( a \in N \). This property is well-known for the SHAP score, as it follows directly from the \emph{efficiency} property of the Shapley value, which ensures that 
\[
\sum_{a=1}^n I(a; F) =F(e)-\EE[F].
\] 
However, this identity does not hold for the Banzhaf value or other cardinality-based indices. Consequently, an alternative argument is required to establish the reduction in these cases.

Towards this end we first establish a technical Lemma,
for which we introduce the following notations.
Consider the random vectors $Y^z$ as in Lemma \ref{Le:fundamental}, and denote $\EE_z[\cdot]$  the expectation  for the corresponding product distribution, keeping $\EE[\cdot]$ for the original distribution of $Y$. Denote by
$m^z(a;S)$ and $I^z(a;F)$ 
the marginal contributions and indices computed with $Y^z$. Finally, for $k=0,\ldots,n$ let 
$$\beta_k^n \ = \ k\, q_{k-1}-(n-k)\,q_k$$ 
with $q_{-1}=q_n=0$, and consider the polynomials $$\mbox{$P_\ell(z) \ \triangleq \ \sum_{k=0}^{\ell}{\ell\choose k}$}\,\beta_k^n\,(1+z)^kz^{\ell-k}.$$ 
\begin{lemma}
    Let $\Theta(z)=\sum_{a=1}^nI^z(a;F)$ be the sum of all indices
     computed for $Y^z$.
     Then, setting  $c_\ell\triangleq\sum_{S\in\mathcal{P}_\ell}\EE[F|S]$ we have $\sum_{\ell=0}^nc_\ell\,P_\ell(z)=(1+z)^n\,\Theta(z)$.
\end{lemma}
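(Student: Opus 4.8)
The plan is to first understand how conditioning under the tilted distribution $Y^z$ decomposes into the \emph{original} conditional expectations, and then to reorganize $\Theta(z)$ into a cardinality-indexed form whose coefficients are exactly the $\beta^n_k$. For the first part I would expand $\EE_z[F|S]$ directly: conditioning on $\{Y^z_S=e_S\}$ freezes the coordinates in $S$ and uses $\PP(Y^z_i=\omega_i)=\frac{z\,\delta_i(\omega_i)+\PP(Y_i=\omega_i)}{1+z}$ for the coordinates $i\notin S$. Expanding this product in the spirit of the identity \eqref{Eq:mix}, each term selects a subset $T\subseteq S^c$ on which the factor $z\,\delta_i$ is chosen, and the inner sum over $\omega$ collapses into $\EE[F|S\cup T]$. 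Writing $R=S\cup T$ this gives the key expansion
$$\EE_z[F|S]=\frac{1}{(1+z)^{\,n-|S|}}\sum_{R\supseteq S} z^{\,|R|-|S|}\,\EE[F|R].$$

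Next I would compute $\Theta(z)$. Expanding $I^z(a;F)=\sum_{S\subseteq\{a\}^c}q_{|S|}\bigl(\EE_z[F|S\cup\{a\}]-\EE_z[F|S]\bigr)$ and summing over $a$, I reindex the first group of terms by $U=S\cup\{a\}$: each set $U$ arises from exactly $|U|$ choices of $a\in U$, contributing $|U|\,q_{|U|-1}\,\EE_z[F|U]$, while in the second group each $S$ is counted by the $n-|S|$ features $a\notin S$, contributing $(n-|S|)\,q_{|S|}\,\EE_z[F|S]$. Subtracting, the coefficient of $\EE_z[F|S]$ becomes precisely $|S|\,q_{|S|-1}-(n-|S|)\,q_{|S|}=\beta^n_{|S|}$, with the boundary conventions $q_{-1}=q_n=0$ handling the cases $|S|=0$ and $|S|=n$. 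Hence $\Theta(z)=\sum_{k=0}^n\beta^n_k\sum_{S\in\mathcal{P}_k}\EE_z[F|S]$.

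Finally I would substitute the expansion into this form and multiply by $(1+z)^n$. Grouping the sets $R$ by their cardinality $\ell=|R|$ and using that an $\ell$-element set has exactly $\binom{\ell}{k}$ subsets of size $k$, the double sum over $S\in\mathcal{P}_k$ and $R\supseteq S$ turns into $\sum_{\ell=0}^n\binom{\ell}{k}z^{\ell-k}c_\ell$ with $c_\ell=\sum_{R\in\mathcal{P}_\ell}\EE[F|R]$. Swapping the order of summation over $k$ and $\ell$ then collects, for each fixed $\ell$, the inner sum $\sum_{k=0}^{\ell}\binom{\ell}{k}\beta^n_k(1+z)^k z^{\ell-k}=P_\ell(z)$, yielding exactly the claimed identity $\sum_{\ell=0}^n c_\ell\,P_\ell(z)=(1+z)^n\,\Theta(z)$.

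I expect the main obstacle to be the two reindexing steps rather than any analytic difficulty. The first is obtaining the exact coefficient $\beta^n_k$ with the correct multiplicities $|U|$ and $n-|S|$ in the computation of $\Theta(z)$; the second is the sum swap that must assemble the binomial weights $\binom{\ell}{k}$, coming from counting the $k$-subsets of each $R$, together with the powers $(1+z)^k z^{\ell-k}$ into precisely $P_\ell(z)$. Everything else is a direct, if bookkeeping-heavy, substitution built on the expansion of $\EE_z[F|S]$.
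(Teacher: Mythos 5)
Your proposal is correct and follows essentially the same route as the paper's proof: the same reindexing of $\Theta(z)$ that produces the coefficients $\beta^n_k$, the same expansion of $\EE_z[F|S]$ into original conditional expectations (which the paper obtains by invoking Lemma~\ref{Le:fundamental}, while you rederive it directly from the identity \eqref{Eq:mix}), and the same $\binom{\ell}{k}$ superset-counting step that assembles the polynomials $P_\ell(z)$. The only difference is the order of the first two steps, which is immaterial.
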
 
\begin{proof}
By direct substitution we have 
\begin{align*}
\Theta(z)&=\sum_{a=1}^n\sum_{k=0}^{n-1}\sum_{S\in\mathcal{P}^{a}_k}\!\!q_k\,m^{z}(a;S)\\
    &=\sum_{a=1}^n\sum_{k=0}^{n-1}\sum_{S\in\mathcal{P}^{a}_k}\!\!q_k\big(\EE_{z}[F|S\cup\{a\}]-\EE_{z}[F|S]\big).
\end{align*}
In this sum, each subset $A\subseteq N$ of size  $|A|=k$ appears $k$ times as a positive term $q_{k-1}\,\EE_{z}[F|S\cup\{a\}]$ for each $a\in A$, and $n-k$ times as a negative term $-q_k\,\EE_{z}[F|S]$ for $a\not\in A$, so that the sum can be expressed as
\begin{align*}
\Theta(z)&=\sum_{k=0}^{n}\sum_{A\in\mathcal{P}_k}\beta_k^n\,\EE_{z}[F|A].
\end{align*}
Now, using Lemma \ref{Le:fundamental} and denoting $\mathcal{P}_j(A^c)$ the subsets of $A^c$ of size $j$, we may  compute $\EE_{z}[F|A]$ as
$$\EE_{z}[F|A]=\frac{1}{(1
+z)^{n-k}}\sum_{j=0}^{n-k}z^j\!\!\!\!\sum_{B\in\mathcal{P}_j(A^c)}\!\!\!\!\!\EE[F|A\cup B]$$
which substituted above gives
\begin{align*}
\Theta(z)&=\sum_{k=0}^{n}\sum_{A\in\mathcal{P}_k}\sum_{j=0}^{n-k}\sum_{B\in\mathcal{P}_j( A^c)}\!\!\!\!\!\!\beta_k^n\,\frac{z^j}{(1+z)^{n-k}}\, \EE[F|A\cup B].
\end{align*}
Next, we observe that every set $S\subseteq N$ of size $|S|=\ell$ admits ${\ell\choose k}$ decompositions
as $S=A\cup B$ with $A\in\mathcal{P}_k$ and $B\in\mathcal{P}_{j}(A^c)$, where $j=\ell-k$, so that
 we may further rewrite 
\begin{align*}
\Theta(z)&=\sum_{\ell=0}^{n}\sum_{S\in\mathcal{P}_\ell}\!\EE[F|S]\sum_{k=0}^{\ell}\mbox{${\ell\choose k}\,\beta_k^n\,\frac{z^{\ell-k}}{(1
+z)^{n-k}}$}\\&=\mbox{$\frac{1}{(1+z)^n}$}\sum_{\ell=0}^nc_\ell\,P_\ell(z).
\end{align*}

\vspace{-4ex}
\end{proof}

Suppose we want to evaluate the expectation $c_0=\EE[F]$. Assuming that the 
simple power indices $I^z(a;F)$ can be computed efficiently, we may evaluate $(1\!+\!z)^n\,\Theta(z)$ at $n+1$ different points $z_0,\ldots,z_n$ 
in order  to write a linear system
$M c=b$ for the coefficients $c=(c_0,\ldots,c_n)$ with $M_{i,\ell}=P_\ell(z_i)$  
and $b_i=(1+z_i)^n\,\Theta(z_i)$ for $0\leq i,\ell\leq n$.
Unfortunately the family $\{P_\ell(\cdot):\ell=0,\ldots,n\}$ is linearly dependent  and the matrix $M$ turns out to be singular (see below). However, since we have explicitly $c_n=F(e)$, we may write a reduced system in 
the unknowns $(c_0,\ldots,c_{n-1})$. We next establish sufficient conditions under which this reduced system is nonsingular, by exploiting the following simple fact. 

\vspace{1ex}
{\sc Remark}. Let $P_0(\cdot),\ldots,P_{n-1}(\cdot)$ be polynomials with $\mathop{\rm deg}(P_\ell)=\ell$.
Then, for any family $z_0,\ldots,z_{n-1}\in\RR$ of different reals, the  following matrix is nonsingular
$$M=\left(
\begin{array}{ccc}
P_0(z_0)&\cdots&P_{n-1}(z_0)\\
\vdots&&\vdots\\
P_0(z_{n-1})&\cdots&P_{n-1}(z_{n-1})
\end{array}\right).$$
 Indeed, take $\alpha\in\mathop{\rm Ker}(M)$ so that $M\alpha=0$. This implies that  $P(x)=\sum_{j=0}^{n-1}\alpha_j\,P_j(x)$ is a polynomial of degree $n-1$ with $n$ distinct roots $z_0,\ldots,z_{n-1}$ and therefore $P(\cdot)\equiv 0$. Since $\mathop{\rm deg}(P_\ell)=\ell$, the polynomials $P_\ell(\cdot)$ are linearly independent so that $\alpha=0$ and therefore $M$ is nonsingular. \qed

 \vspace{1ex}
Using this, we get the following converse of Theorem \ref{Th:direct}.

\begin{theorem}\label{Th:converse}
Consider a simple power index such that $q_0>0$.
Then,
computing $\EE[F]$
reduces in polynomial time to evaluate 
the simple power 
indices $I^z(a;F)$ for all $a=1,\ldots,n$ at
$n$ different positive reals $z=z_0,\ldots,z_{n-1}\in\RR_+$.
\end{theorem}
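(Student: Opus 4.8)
The plan is to read the Lemma as a linear system for the unknowns $c_\ell=\sum_{S\in\mathcal{P}_\ell}\EE[F|S]$ and to recover the target $c_0=\EE[F]$ from it. The identity $\sum_{\ell=0}^n c_\ell\,P_\ell(z)=(1+z)^n\Theta(z)$, with $\Theta(z)=\sum_{a=1}^n I^z(a;F)$, links the $c_\ell$ to quantities we can evaluate. Since $\mathcal{P}_n=\{N\}$ and $\EE[F|N]=F(e)$, the top coefficient $c_n=F(e)$ is known explicitly from a single evaluation of $F$, so I would transpose that term and work with the reduced identity $\sum_{\ell=0}^{n-1}c_\ell\,P_\ell(z)=(1+z)^n\Theta(z)-F(e)\,P_n(z)$ in the $n$ unknowns $c_0,\ldots,c_{n-1}$.

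First I would evaluate the right-hand side at $n$ distinct positive reals $z_0,\ldots,z_{n-1}$. Each $\Theta(z_i)=\sum_{a=1}^n I^{z_i}(a;F)$ is a sum of $n$ simple power indices for the product distribution $Y^{z_i}$, so under the oracle assumption all $n^2$ of these values are obtained in polynomial time; the scalars $(1+z_i)^n$ and $F(e)\,P_n(z_i)$ are computed explicitly. This yields a linear system $M c'=b$ with $c'=(c_0,\ldots,c_{n-1})$, $M_{i,\ell}=P_\ell(z_i)$, and $b_i=(1+z_i)^n\Theta(z_i)-F(e)\,P_n(z_i)$.

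The crux is to prove $M$ nonsingular, for which, by the Remark, it suffices to show $\mathop{\rm deg}(P_\ell)=\ell$ for each $\ell=0,\ldots,n-1$. Every summand $(1+z)^k z^{\ell-k}$ of $P_\ell$ has degree $\ell$ with leading coefficient $1$, so the coefficient of $z^\ell$ in $P_\ell$ is $\sum_{k=0}^\ell\binom{\ell}{k}\beta_k^n$. I would compute this leading coefficient by substituting $\beta_k^n=k\,q_{k-1}-(n-k)\,q_k$, shifting indices, and applying $k\binom{\ell}{k}=\ell\binom{\ell-1}{k-1}$ together with Pascal's rule; the cancellations collapse the expression to $(\ell-n)\sum_{k=0}^\ell\binom{\ell}{k}q_k$. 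This is exactly where $q_0>0$ enters: the factor $\sum_{k=0}^\ell\binom{\ell}{k}q_k$ is a sum of nonnegative terms bounded below by $q_0>0$, hence strictly positive, while $\ell-n<0$ for $\ell\le n-1$, so the leading coefficient is nonzero and $\mathop{\rm deg}(P_\ell)=\ell$ as needed. The same computation gives leading coefficient $0$ at $\ell=n$, which explains why the full $(n+1)\times(n+1)$ matrix is singular and why passing to the reduced system in $c_0,\ldots,c_{n-1}$ is necessary.

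Once nonsingularity is established, the Remark guarantees that the reduced $n\times n$ system has a unique solution, so solving it recovers $c_0=\EE[F]$, and every step (the $n^2$ oracle calls, the single evaluation $F(e)$, and the linear solve) runs in polynomial time. I expect the leading-coefficient computation to be the main obstacle: the cancellations are not transparent from the definition of $\beta_k^n$, and isolating the factor $\sum_{k=0}^\ell\binom{\ell}{k}q_k$ is precisely what pins down both the role of the hypothesis $q_0>0$ and the degeneracy at $\ell=n$.
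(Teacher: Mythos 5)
Your proposal is correct and follows essentially the same route as the paper's proof: the same reduced system obtained by transposing the known value $c_n=F(e)$, the same nonsingularity argument via the Remark, and the same leading-coefficient computation $a_\ell=(\ell-n)\sum_{k=0}^{\ell}\binom{\ell}{k}q_k$, which is where $q_0>0$ enters. The only difference is cosmetic: you spell out the binomial identities ($k\binom{\ell}{k}=\ell\binom{\ell-1}{k-1}$ and Pascal's rule) that the paper compresses into ``some simple algebraic manipulations,'' and you make the $n^2$ oracle-call count explicit.
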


\begin{proof}
Recall that $c_n=F(e)$ and consider the reduced linear system
in the unknowns $c=(c_0,\ldots,c_{n-1})$:
$$\sum_{\ell=0}^{n-1}c_\ell\,P_\ell(z_i)=(1+z_i)^n\,\Theta(z_i)-c_n\,P_n(z_i)\quad\forall\, 0\leq i\leq n-1.$$
The left hand side corresponds to  $M c$ with $M$ as in the remark above.
We claim that since $q_0>0$ each polynomial
$P_{\ell}(\cdot)$ has degree $\ell$ for $\ell=0,\ldots,n-1$. Indeed, the leading coefficient of $z^\ell$ in $P_\ell(z)$ is $a_\ell=\sum_{k=0}^{\ell}{\ell\choose k}\,\beta^n_k$. Recalling that $q_{-1}=0$ and after some simple algebraic manipulations, we get
\begin{align*}
   a_\ell&=\mbox{$\sum_{k=0}^{\ell}{\ell\choose k}(k\, q_{k-1}-(n-k)\,q_k)$}\\
     &=(\ell-n)\mbox{$\sum_{k=0}^{\ell}{\ell\choose k}\,q_k$}.
\end{align*}
The latter is strictly negative for $0\leq\ell\leq n-1$ so that $P_\ell(\cdot)$ has degree $\ell$. Invoking the previous remark it follows that
$M$ is nonsingular and then we can solve the linear system for $c$ in order to recover $c_0=\EE[F]$.
\end{proof}

\noindent {\sc Remark.} Observe that for $\ell=n$ the polynomial $P_n(
\cdot)$ has degree at most $n-1$, so that the family $\{P_\ell(\cdot):\ell=0,\ldots,n\}$ with $P_n(\cdot)$ included is always linearly dependent. \qed

\begin{example} \ 
{\em \begin{itemize}
    \item[(a)] \underline{\em Shapley value}:
For $q_k=\frac{k!\,(n-k-1)!}{n!}$ a direct calculation gives $P_\ell(z)=-z^\ell$ for $\ell=0,\ldots,n-1$ which are  linearly independent with $a_\ell=-1$.
    \item[(b)] \underline{\em Banzhaf value}:
With $q_k={1}/{2^{n-1}}$ for $0\leq k\leq n\!-\!1$ we have
$P_\ell(z)\!=\!\mbox{$\frac{1}{2^{n-1}}$}\big(\ell (1+2z)^{\ell-1}\!-(n-\ell)(1+2z)^\ell\big)$
which has degree $\ell$ with $a_\ell=(\ell-n) 2^{\ell-n+1}$. 
    \item[(c)] \underline{\em  Binomial value}: Binomial indices correspond to the choice
$q_k=\theta^k(1-\theta)^{n-1-k}$ with $\theta\in(0,1)$.
In this case $P_\ell(z)=(1-\theta)^{n-\ell-1}\big(\ell(z+1)(z+\theta)^{\ell-1}-n(z+\theta)^\ell\big)$
with $a_\ell=(\ell-n)(1-\theta)^{n-\ell-1}$.
    \item[(d)] \underline{\em Dictatorial value}: This is given by
$q_{0}=1$ and $q_k=0$ for $k=1,\ldots,n-1$. Here $p_\ell(z)=\ell z^{\ell-1}-(n-\ell)z^\ell$
 and $a_\ell=\ell-n$.
    \item[(e)] \underline{\em  Marginal value}: In this case
$q_{n-1}=1$ and $q_k=0$ for $k=0,\ldots,n-2$. Linear independence fails since $P_\ell(z)\equiv 0$ for $\ell=0,\ldots,n-2$. 
Moreover, even if $\EE[F]$ is difficult to compute, the marginal indices can be computed efficiently (assuming that $F$ is) by using the explicit formula
\begin{align*}
    I(a;F)&=\EE\big[F|N\big]-\EE\big[F|N\setminus\{a\}\big]\\
    &=F(e)-\!\!\sum_{\omega_a\in\Omega_a}\!\!F(\omega_a,e_{-a})\PP(Y_a\!=\!\omega_a).
    \end{align*}
This shows that in general there is no polynomial time reduction from the indices to the expectation, and some additional condition such as $q_0>0$ is needed. \qed
\end{itemize} } 
\end{example} 


\section{Bernoulli indices}
\label{sec:bpi} 

As we show in this section, 
some power indices can be computed much more efficiently than through polynomial interpolation over a polynomial number of expected values. 
Let $a\in N$ be a fixed feature and consider a power index in which the probability $Q$ over $\mathcal{P}(\{a\}^c)$ results from independent Bernoulli trials. Specifically, we consider a random subset $S\subseteq\{a\}^c$ that includes each $i\neq a$ with probabilities $\theta_i\in[0,1]$, so that  $$Q(S)=\prod_{i\in S}\theta_i\cdot\prod_{i\not\in S\cup\{a\}}(1-\theta_i).$$ 
The corresponding index 
$$I(a;F) \ \triangleq \ \sum_{S\subseteq\{a\}^c}\!\!\!Q(S)\big(\EE\big[F|S\cup\{a\}\big]-\EE\big[F|S\big]\big)$$
is then called a {\em Bernoulli} power index. 

\medskip 
\noindent{\sc Remark.} This class includes the binomial indices with $\theta_i \equiv \theta$ and the Banzhaf value with $\theta_i \equiv \frac{1}{2}$, both of which are also cardinality-based indices. However, apart from these cases, Bernoulli indices are not necessarily simple, as $Q(S)$ does not need to depend solely on $|S|$. Observe that the SHAP score is not a Bernoulli index.
  \qed
\vspace{2ex}

Suppose that the feature $a$ is also included with some
probability $\theta_a$ and let
$$Q_\theta(S) \ \triangleq \ 
\prod_{i\in S}\theta_i\cdot\prod_{i\not\in S}(1-\theta_i)$$ be the distribution over all subsets $S\subseteq N$. By considering $\theta^1$ and $\theta^0$
the vectors where we take respectively $\theta_a=1$ and $\theta_a=0$, and denoting $Q_1$ and $Q_0$ the corresponding distributions over $\mathcal{P}(N)$, the Bernoulli index can be expressed  as
$$I(a;F) \ = \ \sum_{S\subseteq N}\!Q_{1}(S)\,\EE\big[F|S\big]-\sum_{S\subseteq N}\!Q_{0}(S)\,\EE\big[F|S\big].$$
We claim that this is a difference of expectations with respect to two product distributions, namely 
\begin{theorem}\label{Thm:cinco}
  Let $Y^\theta=(Y^{\theta_1}_1,\ldots,Y^{\theta_n}_n)$
be a tuple of independent random variables with marginals given by the mixtures
$\PP(Y_i^{\theta_i}\!=\!\omega_i)
=\theta_i \delta_i(\omega_i)+(1\!-\!\theta_i)\PP(Y_i\!=\!\omega_i)$. Then, denoting $Y^{\theta,0}$ and $Y^{\theta,1}$ the tuples $Y^\theta$ with $\theta_a\!=\!0$ and $\theta_a\!=\!1$, we have $$I(a;F) \ = \ \EE\big[F(Y^{\theta,1})\big]-\EE\big[F(Y^{\theta,0})\big].$$
\end{theorem}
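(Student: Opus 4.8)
The plan is to reduce everything to a single generating identity for the full product distribution $Q_\theta$ over \emph{all} subsets of $N$, namely
$$\sum_{S\subseteq N}Q_\theta(S)\,\EE[F|S] \ = \ \EE[F(Y^\theta)],$$
and then to specialize it at $\theta_a=1$ and $\theta_a=0$. Granting this identity, the theorem follows at once from the expression
$$I(a;F) \ = \ \sum_{S\subseteq N}Q_1(S)\,\EE[F|S]-\sum_{S\subseteq N}Q_0(S)\,\EE[F|S]$$
already derived in the text, since its two terms become $\EE[F(Y^{\theta,1})]$ and $\EE[F(Y^{\theta,0})]$ respectively.

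To establish the identity I would expand each conditional expectation as $\EE[F|S]=\sum_{\omega\in\Omega}F(\omega)\prod_{i\in S}\delta_i(\omega_i)\prod_{i\not\in S}\PP(Y_i=\omega_i)$, substitute $Q_\theta(S)=\prod_{i\in S}\theta_i\prod_{i\not\in S}(1-\theta_i)$, and exchange the order of the sum over $S$ with the sum over $\omega$. After grouping the two factors attached to each index $i$, this leaves, for every fixed $\omega$, the inner sum $\sum_{S\subseteq N}\prod_{i\in S}\theta_i\delta_i(\omega_i)\prod_{i\not\in S}(1-\theta_i)\PP(Y_i=\omega_i)$.

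The key step is to collapse this inner sum using the factorization identity \eqref{Eq:mix} with $u_i=\theta_i\,\delta_i(\omega_i)$ and $v_i=(1-\theta_i)\,\PP(Y_i=\omega_i)$; this turns it into $\prod_{i=1}^n\big(\theta_i\delta_i(\omega_i)+(1-\theta_i)\PP(Y_i=\omega_i)\big)$, which is exactly $\prod_{i=1}^n\PP(Y_i^{\theta_i}=\omega_i)$ by the definition of the mixture marginals. Summing against $F(\omega)$ then recovers $\EE[F(Y^\theta)]$ and proves the identity. Since this is a one-shot application of \eqref{Eq:mix}, essentially identical in spirit to the proof of Lemma \ref{Le:fundamental}, I do not expect a genuine obstacle; the only point demanding care is the bookkeeping of the special feature $a$. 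Setting $\theta_a=1$ forces $a$ into every coalition (contributing the factor $\delta_a(\omega_a)$, i.e.\ conditioning on $Y_a=e_a$), while $\theta_a=0$ excludes it, so that the difference of the two specialized expectations reproduces precisely the $\big(\EE[F|S\cup\{a\}]-\EE[F|S]\big)$ structure weighted by the Bernoulli distribution $Q$ over $\{a\}^c$ appearing in the definition of $I(a;F)$. Verifying this consistency is the only place where attentiveness is genuinely required.
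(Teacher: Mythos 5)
Your proposal is correct and follows essentially the same route as the paper: the paper's proof of Theorem~\ref{Thm:cinco} consists precisely of the generating identity $\sum_{S\subseteq N}Q_\theta(S)\,\EE[F|S]=\EE[F(Y^\theta)]$, proved by exchanging the sums over $S$ and $\omega$ and collapsing the inner sum via the factorization \eqref{Eq:mix} into $\prod_{i\in N}\big(\theta_i\delta_i(\omega_i)+(1-\theta_i)\PP(Y_i=\omega_i)\big)=\PP(Y^\theta=\omega)$, with the specialization to $\theta_a=1$ and $\theta_a=0$ handled, as you note, by the expression for $I(a;F)$ already derived in the text preceding the theorem. Your bookkeeping of the special feature $a$ (that $\theta_a=1$ forces $a$ into every coalition and $\theta_a=0$ excludes it, recovering the marginal-contribution structure) is also sound.
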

\begin{proof}
This follows from the general identity 
\begin{eqnarray*}
  && \hspace{-5ex}\sum_{S\subseteq N}\!\!Q_\theta(S)\,\EE[F|S]\\
  &&\hspace{-2ex} =\sum_{S\subseteq N}\!\!Q_\theta(S)\sum_{\omega\in\Omega}\!F(\omega)\prod_{i\in S}\delta_i(\omega_i)\!\times\!\prod_{i\not\in S}\PP(Y_{i}\!=\!\omega_{i})\\
&&\hspace{-2ex}=\sum_{\omega\in\Omega}\!F(\omega)\!\!\sum_{S\subseteq N}\prod_{i\in S}\theta_i\delta_i(\omega_i)\!\times\!\prod_{i\not\in S}(1\!-\!\theta_i)\PP(Y_{i}\!=\!\omega_{i})\\
&&\hspace{-2ex}=\sum_{\omega\in\Omega}F(\omega)\!\prod_{i\in N}\!\big(\theta_i\delta_i(\omega_i)+(1\!-\!\theta_i)\PP(Y_{i}\!=\!\omega_{i})\big)\\
&&\hspace{-2ex}=\sum_{\omega\in\Omega}F(\omega)\,\PP(Y^\theta\!\!=\!\omega)\\
&&\hspace{-2ex}=\EE[F(Y^\theta)].
\end{eqnarray*}  

\vspace{-4ex}
\end{proof}

\noindent{\sc Remark.} As mentioned earlier, the Banzhaf and binomial indices are Bernoulli indices, so by Theorem \ref{Thm:cinco} they can be calculated by computing just 2 expectations.
This is more efficient than considering them as cardinality-based and using Theorem \ref{Th:direct} which requires
 $2n$ expectations. \qed


\section{Interaction Indices}
\label{sec:ipi} 
 {\em Interaction indices} aim not only to measure the relevance of a feature $a \in N$ in isolation,  but more generally to explain  a subset of features $A\subseteq N$. An interaction index is defined as 
\begin{align*}
I(A;F) \ = \ \sum_{S\subseteq A^c} \!\!Q_A(S)\,m(A;S)
\end{align*}
where $Q_A(\cdot)$ is a probability on  $\mathcal{P}(A^c)$, and 
the marginal contribution of $A$ to $S$ is defined as in \cite{beliakov2020discrete,DBLP:conf/icml/SundararajanN20} by
$$m(A;S) = \sum_{B\subseteq A}(-1)^{|A\setminus B|}\EE\big[F|S\cup B\big].$$
Alternatively, we have $$I(A;F)\ =\  \sum_{B\subseteq A}(-1)^{|A\setminus B|}R(B),$$ 
where $R(B)=\sum_{S\subseteq A^c} Q_A(S)\,\EE\big[F|S\cup B\big]$.

\subsection{Cardinality-based interaction indices}

An interaction index is said to be {\em simple} if it is characterized by a distribution \( Q_A(S) = q(|S|, |A|) \), where the function \( q \) depends solely on the cardinalities of the sets \( S \) and \( A \). We show that the computation of simple interaction indices 
retains the same properties as their single-feature counterparts: 
\begin{itemize}
    \item The computation of any interaction index for a model \( F \) can be reduced in polynomial time to the computation of expected values over \( F \). 
    \item Conversely, if \( q(1,0) > 0 \), the computation of expected values over \( F \) can be reduced in polynomial time to the computation of interaction indices over \( F \).
\end{itemize}
Note that it suffices to prove the first item, as the second follows directly from Theorem~\ref{Th:converse}. This is because any index of the form \( I(a; F) \), where \( a \in N \), is also an interaction index. 
The proof of the first item (Theorem~\ref{theo:redext} below) differs from that of Theorem~\ref{Th:direct} due to the nature of interaction indices. Specifically, the computation now requires the use of {\em bivariate} polynomial interpolation to determine the indices based on a set of expected values.

\begin{theorem} \label{theo:redext}
Let $A\subseteq N$ with $|A|=m$. Then a simple interaction index $I(A;F)$ can be reduced  to computing   the expected values $\EE\big[F(Z)\big]$
for $(n\!-\!m\!+\!1)(m\!+\!1)$ random vectors $Z=(Z_1,\ldots,Z_n)$ with product distribution.
\end{theorem}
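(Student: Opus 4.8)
The plan is to mimic the strategy of Theorem~\ref{Th:direct}, but to perform the interpolation in \emph{two} variables: one parameter controlling the subsets of $A^c$ and a second one controlling the subsets of $A$.

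First I would reduce the computation of $I(A;F)$ to that of a doubly-indexed family of coefficients. Starting from $I(A;F)=\sum_{B\subseteq A}(-1)^{|A\setminus B|}R(B)$ and expanding $R(B)=\sum_{k=0}^{n-m}q(k,m)\,c_k(B)$ with $c_k(B)=\sum_{S\in\mathcal{P}_k(A^c)}\EE[F|S\cup B]$, I would exchange the order of summation and group the sets $B\subseteq A$ by cardinality $|B|=j$, using that $(-1)^{|A\setminus B|}=(-1)^{m-j}$ depends only on $j$. Since the index is \emph{simple}, the weight $q(k,m)$ can be pulled out of the inner sums, yielding
\[
I(A;F)=\sum_{k=0}^{n-m}\sum_{j=0}^{m}q(k,m)\,(-1)^{m-j}\,c_{k,j},\qquad
c_{k,j}=\!\!\sum_{S\in\mathcal{P}_k(A^c)}\sum_{B\in\mathcal{P}_j(A)}\!\!\EE[F|S\cup B].
\]
It therefore suffices to compute the $(n-m+1)(m+1)$ coefficients $c_{k,j}$ for $0\le k\le n-m$ and $0\le j\le m$, which already matches the claimed bound.

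Next I would establish a bivariate analogue of Lemma~\ref{Le:fundamental} that recovers the $c_{k,j}$ as coefficients of a two-variable polynomial. For $z,w\ge 0$, consider the product distribution $Y^{z,w}$ assigning to each $i\in A^c$ the mixture marginal of $Y^z$ with parameter $z$, and to each $i\in A$ the analogous mixture with parameter $w$. Multiplying $\EE[F(Y^{z,w})]$ by $(1+z)^{n-m}(1+w)^m$ and expanding the two products with the mixing identity~(\ref{Eq:mix})---once over $A^c$ with the pair $(z\delta_i,\PP(Y_i=\cdot))$ and once over $A$ with $(w\delta_i,\PP(Y_i=\cdot))$---then exchanging sums and grouping by $|S|=k$ and $|B|=j$, gives the key identity
\[
(1+z)^{n-m}(1+w)^m\,\EE[F(Y^{z,w})]=\sum_{k=0}^{n-m}\sum_{j=0}^{m}c_{k,j}\,z^k w^j .
\]
Note that each such evaluation is literally a single expectation of $F$ under a product distribution, so no auxiliary splitting into two expectations (as in Theorem~\ref{Th:direct}) is needed.

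Finally I would recover the $c_{k,j}$ by grid interpolation. Choosing distinct nonnegative reals $z_0,\ldots,z_{n-m}$ and $w_0,\ldots,w_m$ and evaluating the right-hand side at all $(n-m+1)(m+1)$ pairs $(z_s,w_t)$ produces a linear system in the unknowns $c_{k,j}$ whose matrix is the Kronecker product of a Vandermonde matrix in the $z_s$ and a Vandermonde matrix in the $w_t$; since both factors are nonsingular, so is the product, and the system can be solved (equivalently, one interpolates in $w$ for each fixed $z_s$ and then in $z$). Substituting the resulting $c_{k,j}$ into the displayed formula for $I(A;F)$ completes the reduction. The main new obstacle relative to Theorem~\ref{Th:direct} is precisely this two-variable interpolation: setting up the split $N=A\cup A^c$ so that the generating function factors as a double application of~(\ref{Eq:mix}), and verifying that the resulting tensor Vandermonde system is invertible so that exactly $(n-m+1)(m+1)$ grid evaluations suffice.
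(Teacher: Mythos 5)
Your proposal is correct and follows essentially the same route as the paper's proof: the same decomposition into coefficients $c_{k,j}$, the same bivariate generating identity obtained by applying the mixing identity \eqref{Eq:mix} separately over $A$ and $A^c$, and the same rectangular-grid interpolation (your Kronecker-product argument for nonsingularity is equivalent to the paper's appeal to the bivariate Vandermonde determinant). In fact your normalization $(1+z)^{n-m}(1+w)^m$ is the correct one; the paper's displayed factor $(1+z)^n$ contains a typo, as the sum over $B\subseteq A$ produces $(1+y)^{|A|}$, not $(1+z)^{|A|}$.
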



\begin{proof}
By splitting the sums according to the cardinality of the sets, and denoting  
$$c_{k,j} \ =\ \sum_{\stackrel{\mbox{\scriptsize $S\!\subseteq\!A^c$}}{
|S|=k}
}\sum_{\stackrel{\mbox{\scriptsize $B\!\subseteq\!A$}}{
|B|=j}
}\!\EE\big[F|S\cup B\big]$$
we have
\begin{align*}
I(A;F) &= \sum_{S\subseteq A^c}Q_A(S)\sum_{B\subseteq A}(-1)^{|A\setminus B|}\,\EE\big[F|S\cup B\big]\\
&= \sum_{k = 0}^{n-m}\sum_{j = 0}^{m}\,q(k,m)(-1)^{m - j}\,c_{k,j}
\end{align*}
so that $I(A;F)$ reduces to computing the  sums $c_{k,j}$. 

Consider the bivariate polynomial $$P(z,y) \ = \ \sum_{k = 0}^{n-m}\sum_{j = 0}^{m}c_{k,j}\,z^ky^j.$$
Proceeding as in the proof of Lemma \ref{Le:fundamental}, we have
     \begin{align}
P(z,y)
&=\!\sum_{\omega\in\Omega}F(\omega)\!\sum_{S\subseteq A^c}\!\sum_{B\subseteq A}\pi_{S,B}\label{Eq:ppp}
\end{align}
with
\begin{align*}
\pi_{S,B}&=\prod_{i\in S}z\,\delta_i(\omega_i)\prod_{i\in  B}y\,\delta_i(\omega_i)\!\!\prod_{i\not\in S\cup B}\!\!\PP(Y_i\!=\!\omega_i).
    \end{align*}
    The latter can be factorized as $\pi_{S,B}=\pi_S\cdot \pi_B$ with \begin{align*}  
    \pi_S&=\prod_{i\in S}z\,\delta_i(\omega_i)\!\!\prod_{i\in A^c\setminus S}\!\!\!\!\PP(Y_i\!=\!\omega_i)\\
    \pi_B&=\prod_{i\in B}y\,\delta_i(\omega_i)\!\!\prod_{i\in A\setminus B}\!\!\!\PP(Y_i\!=\!\omega_i)
    \end{align*}
    from which it follows that
    \begin{align*}
        \sum_{S\subseteq A^c}\!\sum_{B\subseteq A}\pi_{S,B}&=\mbox{$\left(\sum_{S\subseteq A^c}\pi_S\right)\left(\sum_{B\subseteq A}\pi_{B}\right)$}.
    \end{align*}
    Now, using the identity \eqref{Eq:mix} (with $A^c$ instead of $N$) we get
     \begin{align*}
        \mbox{$\sum_{S\subseteq A^c}\pi_S$}&=\prod_{i\in A^c}\big(z\,\delta_i(\omega_i)+\PP(Y_i\!=\!\omega_i)\big)\\
        &=(1+z)^{|A^c|}\prod_{i\in A^c}\PP(Y^z_i\!=\!\omega_i)
    \end{align*}
    and similarly 
     \begin{align*}
        \mbox{$\sum_{B\subseteq A}\pi_B$}&=(1+z)^{|A|}\prod_{i\in A}\PP(Y^y_i\!=\!\omega_i).
    \end{align*}
    Combining these two expressions we deduce that
    \begin{align*}
        \sum_{S\subseteq A^c}\!\sum_{B\subseteq A}\pi_{S,B}&=(1+z)^n\PP(Z\!=\!\omega)
    \end{align*}
    with $Z=((Y^y_i)_{i\in A},(Y^z_i)_{i\in A^c})$.
    Substituting this into \eqref{Eq:ppp} it follows that 
    computing $P(z,y)$ reduces to evaluate an expected value $P(z,y)=(1+z)^n\EE\big[F(Z)\big]$ for a random vector $Z$ with product distribution.

    Now, evaluating $P(z,y)$ over a rectangular grid
    $$G=\{z_0,\ldots,z_{n-m}\}\times\{y_0,\ldots,y_m\}$$ with all the $z_i$'s and $y_j$'s different, we obtain a linear system for the coefficients $c_{k,j}$. The matrix for this system is a bivariate Vandermonde matrix $M$ with
    nonzero determinant (see \cite[page 23]{DeMarchi2014}): 
    $$|\mathop{\rm det}(M)|=\big(\prod_{i<k}|z_i-z_k|\big)^{m+1}\times \big(\prod_{j<\ell}|y_j-y_\ell|\big)^{n-m+1}\neq 0$$
    where the first product is over all pairs $i,k\in\{0,\ldots,n-m\}$ such that $i<k$, and the second over all $j,\ell\in\{0,\ldots,m\}$ with $j<\ell$.
Hence one can uniquely solve the system to find the coefficients $c_{k,j}$, and then compute $I(A;F)$.
\end{proof}

\subsection{Bernoulli interaction indices}

Following with the ideas developed in Section \ref{sec:bpi}, let us now consider a Bernoulli interaction index defined as follows for every 
$A \subseteq N$: 
$$Q_A(S)=\prod_{i\in S}\theta_i \times \!\!\!\prod_{i\in A^c\setminus S}\!\!(1\!-\!\theta_i).$$

\noindent{\sc Remark.} Notice that for any specific set $A$ we could choose different Bernoulli probabilities $\theta_i^A$ for $i\in A^c$. 
Since here we consider a fixed $A$, we keep the simpler notation $\theta_i$. \qed

\medskip 

To conclude, we present the following straightforward extension of Theorem \ref{Thm:cinco}. This result establishes that if \( A \) has size \( m \), then \( I(A;F) \) can be computed using an explicit formula involving \( 2^m \) expected values over \( F \). When \( m = 1 \), this reduces precisely to the statement of Theorem \ref{Thm:cinco}. Notice the difference with Theorem \ref{theo:redext}, as in this case the number of expected values one needs to compute does not depend on the number of features of the underlying model $F$.


\begin{theorem}
A Bernoulli interaction index $I(A;F)$ can be reduced  to computing   the expected values $\EE\big[F(Z)\big]$
for $2^m$ random vectors $Z=(Z_1,\ldots,Z_n)$ with product distribution. 
\end{theorem}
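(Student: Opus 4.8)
The plan is to exploit the alternative representation $I(A;F)=\sum_{B\subseteq A}(-1)^{|A\setminus B|}R(B)$ already recorded in the excerpt, where $R(B)=\sum_{S\subseteq A^c}Q_A(S)\,\EE[F|S\cup B]$. Since there are exactly $2^m$ subsets $B\subseteq A$, it suffices to show that each individual quantity $R(B)$ equals an expected value $\EE[F(Z)]$ for an appropriate product-distributed random vector $Z=Z(B)$; summing these $2^m$ expectations with the signs $(-1)^{|A\setminus B|}$ then yields $I(A;F)$, and no interpolation is needed.

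The core step is a direct computation in the spirit of Theorem~\ref{Thm:cinco}, but localized to the coordinates in $A^c$. First I would expand $\EE[F|S\cup B]$ as $\sum_{\omega\in\Omega}F(\omega)\prod_{i\in S\cup B}\delta_i(\omega_i)\prod_{i\notin S\cup B}\PP(Y_i=\omega_i)$ and substitute it into $R(B)$. Exchanging the order of summation, the factors depending only on $B$, namely $\prod_{i\in B}\delta_i(\omega_i)\prod_{i\in A\setminus B}\PP(Y_i=\omega_i)$, can be pulled outside the sum over $S$, leaving an inner sum of the form $\sum_{S\subseteq A^c}\prod_{i\in S}\theta_i\delta_i(\omega_i)\prod_{i\in A^c\setminus S}(1-\theta_i)\PP(Y_i=\omega_i)$. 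Applying the mixing identity~\eqref{Eq:mix} with $A^c$ in place of $N$ collapses this inner sum into $\prod_{i\in A^c}\big(\theta_i\delta_i(\omega_i)+(1-\theta_i)\PP(Y_i=\omega_i)\big)=\prod_{i\in A^c}\PP(Y_i^{\theta_i}=\omega_i)$, exactly the mixture marginals of Theorem~\ref{Thm:cinco}.

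Putting the pieces together, $R(B)=\EE[F(Z(B))]$ where $Z(B)$ is the product-distributed vector that fixes $Z_i=e_i$ for $i\in B$, keeps the original law $Y_i$ for $i\in A\setminus B$, and uses the Bernoulli mixture $Y_i^{\theta_i}$ for $i\in A^c$. Hence $I(A;F)=\sum_{B\subseteq A}(-1)^{|A\setminus B|}\EE[F(Z(B))]$, a signed sum of $2^m$ expectations over product distributions, as claimed. I expect the only real obstacle to be purely bookkeeping: correctly partitioning $N\setminus(S\cup B)$ into $(A^c\setminus S)\cup(A\setminus B)$ so that the $\delta_i$ and $\PP(Y_i=\cdot)$ factors land on the intended coordinates, and ensuring the mixing identity is applied over $A^c$ only. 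As a sanity check, when $m=1$ with $A=\{a\}$ the two subsets $B=\{a\}$ and $B=\emptyset$ reproduce the vectors $Y^{\theta,1}$ and $Y^{\theta,0}$ with signs $+1$ and $-1$, recovering Theorem~\ref{Thm:cinco} precisely.
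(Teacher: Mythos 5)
Your proposal is correct and takes essentially the same route as the paper's own proof: decompose $I(A;F)=\sum_{B\subseteq A}(-1)^{|A\setminus B|}R(B)$, expand $\EE[F|S\cup B]$, pull out the factors indexed by $B$ and $A\setminus B$, and collapse the inner sum over $S\subseteq A^c$ via the mixing identity into $\prod_{i\in A^c}\big(\theta_i\delta_i(\omega_i)+(1-\theta_i)\PP(Y_i\!=\!\omega_i)\big)$, yielding $R(B)=\EE\big[F(Z(B))\big]$ with $Z(B)=((e_i)_{i\in B},(Y_i)_{i\in A\setminus B},(Y_i^{\theta})_{i\in A^c})$. This is exactly the paper's argument, including the identification of the product-distributed vector $Z(B)$, so no changes are needed.
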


\begin{proof}
Using $I(A;F)=\sum_{B\subseteq A}(-1)^{|A\setminus B|}R(B)$, it suffices  to show that each $R(B)$ is an expectation $\EE\big[F(Z)\big]$ for some $Z$ with product distribution. Indeed, since
\begin{align*}
    R(B)
    &=\!\!\sum_{S\subseteq A^c} \EE\big[F|S\cup B\big]\prod_{i\in S}\theta_i \times \!\!\!\!\prod_{i\in A^c\setminus S}\!\!(1\!-\!\theta_i)\\
   \EE\big[F|S\cup B\big]&=\!\sum_{\omega\in \Omega}F(\omega)\!\!\prod_{i\in S\cup B}\!\!\!\delta_i(\omega_i)\times\!\!\!\prod_{i\notin S\cup B}\!\!\!\PP(Y_i\!=\!\omega_i)   
\end{align*}
after some straightforward algebraic manipulations we get $$R(B)=\!\sum_{\omega\in \Omega}F(\omega)\prod_{i\in B}\delta_i(\omega_i)\times\!\!\!\prod_{i\in A\setminus B}\!\!\!\PP(Y_i\!=\!\omega_i)\times D(\omega)$$ where
\begin{align*}
    D(\omega)&=\!\!\sum_{S\subseteq A^c} \prod_{i\in S}\theta_i\delta_i(\omega_i)\times \!\!\!\!\prod_{i\in A^c\setminus S}\!\!(1\!-\!\theta_i)\PP(Y_i\!=\!\omega_i)  \\
    &=\!\prod_{i\in A^c}\!\big(\theta_i\delta_i(\omega_i)+(1\!-\!\theta_i)\PP(Y_i\!=\!\omega_i) \big).
\end{align*}
Hence, denoting $Y_i^\theta$ a random variable  with distribution $$\PP(Y_i^\theta\!=\!\omega_i)
=\theta_i \delta_i(\omega_i)+(1\!-\!\theta_i)\PP(Y_i\!=\!\omega_i)$$
it follows that $R(B)=\EE\big[F(Z)\big]$ where
$$Z=((e_i)_{i\in B},(Y_i)_{i\in A\setminus B},(Y_i^\theta)_{i\in A^c}).$$ 

\vspace{-2ex}
\end{proof}

\section{Final remarks}
\label{sec:final}

We have provided a detailed analysis of the complexity of computing power indices, including scenarios where sets of features of arbitrary size are considered. As shown, the polynomial equivalence between computing the index and computing expected values is not unique to SHAP,  
but in some instances 
computing the index might actually be 
simpler than computing expected values.
An open question remains regarding a precise characterization of indices that exhibit this property. While $q_0 > 0$ guarantees this behavior, it is possible that certain indices with 
$q_0 = 0$ might also satisfy it.  

\bibliographystyle{featureattcc}
\bibliography{featureattcc.bib}

\end{document}